\documentclass{article}

\usepackage{microtype}
\usepackage{subcaption}
\usepackage{booktabs} 
\usepackage[table]{xcolor}
\usepackage{hyperref}
\usepackage{graphicx}
\usepackage{makecell}
\usepackage{times}

\definecolor{airforceblue}{RGB}{230,242,240} 

\newcommand{\rwpara}[1]{%
  \par\noindent\textbf{#1}\ }

\usepackage[accepted]{icml2026}

\usepackage{amsmath}
\usepackage{amssymb}
\usepackage{mathtools}
\usepackage{amsthm}

\usepackage[capitalize,noabbrev]{cleveref}

\theoremstyle{plain}
\newtheorem{theorem}{Theorem}[section]

\newtheorem{lemma}[theorem]{Lemma}
\newtheorem{corollary}[theorem]{Corollary}
\theoremstyle{definition}
\newtheorem{definition}[theorem]{Definition}

\theoremstyle{remark}

\usepackage[disable,textsize=tiny]{todonotes}

\icmltitlerunning{Training--Inference Consistent Segmented Execution for Long-Context LLMs}

\begin{document}

\twocolumn[
  \icmltitle{Training--Inference Consistent Segmented Execution 
for Long-Context LLMs}




  \begin{icmlauthorlist}
    \icmlauthor{Xianpeng Shang}{imu1,imu2,imu3}
    \icmlauthor{Jiang Li}{imu1,imu2,imu3}
    \icmlauthor{Zehua Duo}{imu1,imu2,imu3}
    \icmlauthor{Qianyi Cai}{hkustgz}
    \icmlauthor{Xiangdong Su}{imu1,imu2,imu3}
  \end{icmlauthorlist}

  \icmlaffiliation{imu1}{College of Computer Science, Inner Mongolia University, Hohhot 010021, China}
  \icmlaffiliation{imu2}{National \& Local Joint Engineering Research Center of Intelligent Information Processing Technology for Mongolian, Hohhot 010021, China}
  \icmlaffiliation{imu3}{Inner Mongolia Key Laboratory of Multilingual Artificial Intelligence Technology, Hohhot 010021, China}
  \icmlaffiliation{hkustgz}{Thrust of Artificial Intelligence, The Hong Kong University of Science and Technology (Guangzhou), China}

  \icmlcorrespondingauthor{Xiangdong Su}{cssxd@imu.edu.cn}

  \icmlkeywords{Long-context language models, efficient attention, training-inference alignment, segmented execution}

  \vskip 0.3in
]



\printAffiliationsAndNotice{}  

\begin{abstract}
  Transformer-based large language models face severe scalability challenges in long-context generation due to the computational and memory costs of full-context attention.
  Under practical computation and memory constraints, many inference-efficient long-context methods improve efficiency by adopting bounded-context or segment-level execution only during inference, while continuing to train models under full-context attention, resulting in a mismatch between training and inference execution and state-transition semantics.
  Based on this insight, we propose a training--inference consistent segment-level generation framework, in which training and inference follow the same segment-level forward execution semantics.
  During training, consistency with inference is enforced by restricting gradient propagation to KV states carried over from the immediately preceding segment, while permitting head-specific access to past KV states during the forward pass without involving them in gradient propagation.
  Across long-context benchmarks, our approach achieves performance comparable to full-context attention, while achieving competitive latency--memory trade-offs against strong inference-efficient baselines, and substantially improving scalability at very long context lengths (e.g., approximately $6\times$ lower peak prefill memory at 128K compared to full-context attention with FlashAttention).
\end{abstract}

\section{Introduction}

Long-context modeling is increasingly vital for large language models
\cite{achiam2023gpt,team2024gemini,anthropic2024claude3},
underpinning practical applications like document understanding, sustained dialogue, and complex reasoning
\cite{bai2024longbench}.
However, the quadratic computational complexity of full-context self-attention fundamentally limits the scalability of Transformer models in long-context scenarios
\cite{vaswani2017attention,keles2023computational}.
Accordingly, long-context inference is often performed using restricted execution regimes, such as bounded-context or chunked attention, to reduce computational costs
\cite{DBLP:conf/iclr/XiaoTCHL24,DBLP:journals/corr/abs-2502-00299}. Recent work has substantially improved the efficiency of long-context inference through execution-level optimizations that preserve attention semantics, reducing memory consumption and practical inference costs without altering model outputs
\cite{DBLP:conf/iclr/Dao24,DBLP:journals/corr/abs-2308-16369}.
Yet, as context length continues to scale, the resource savings offered by such semantics-preserving execution-level optimizations alone are often insufficient in practice (Figure~\ref{fig:prefill_mem}),
and more restrictive execution strategies, such as windowed or sparse attention mechanisms, are commonly adopted.

\begin{figure}[t]
  \centering
  \includegraphics[width=\columnwidth]{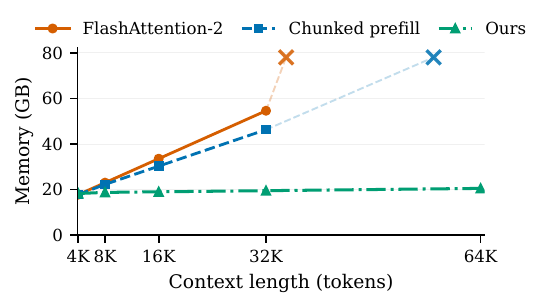}
  \caption{
    Peak GPU memory consumption during long-context prefill.
  }
  \vspace{-14pt}
  \label{fig:prefill_mem}
\end{figure}

Despite their effectiveness at inference time, most existing approaches impose these restricted execution regimes only during inference, while still relying on full-context attention during training.
This leads to a mismatch between training and inference in both execution and cross-segment state evolution.
As a result, the model can depend on information available during training that is absent under the constrained inference regime, which can undermine stability and generalization in long-context settings.

To address this limitation, we propose a training--inference consistent segment-level generation framework,
which treats segmented execution as a shared modeling assumption rather than an inference-time optimization.
We partition the sequence into segments and carry only a fixed-size KV tail as the \emph{sole} differentiable
cross-segment interface state, which is used \emph{verbatim} in both training and inference.
Training restricts cross-segment credit assignment to the most recent $K$ state transitions via TBPTT; under
this strictly bounded recursion, TBPTT computes the exact gradient of an inference-consistent objective,
preventing reliance on information unavailable at inference time.
To access evidence beyond the carried-KV horizon, the model additionally consumes a retrieved KV prefix in a
forward-only manner (no gradient), which does not participate in state recursion.
Architecturally, we realize this design with head- and layer-sparse long-range heads,
while the majority of heads support local, state-carrying computation,
yielding execution semantics that are strictly aligned between training and inference.

Our contributions are as follows:
\begin{itemize}
\item We propose a segment-level modeling framework for long-context modeling that enforces training–inference consistency by design, decomposing cross-segment information flow into
a local continuity channel and a separate forward-only mechanism
that provides long-range conditioning.
\item We show that training–inference alignment can be theoretically guaranteed without introducing persistent memory variables, by restricting cross-segment learning to a controlled interface state. Under this constrained formulation, truncated backpropagation computes the exact gradient of the inference-consistent objective, rather than an approximation.
\item We empirically validate the proposed training--inference consistent framework across multiple long-context benchmarks and context lengths, demonstrating strong performance under constrained execution, ablations indicating that TBPTT with $K{=}1$ is sufficient and optimal, and substantially improved scalability (e.g., $\sim6\times$ lower memory at 128K-context prefill compared to full attention).
\end{itemize}

\begin{figure*}[t]
  \centering
  \includegraphics[width=\textwidth]{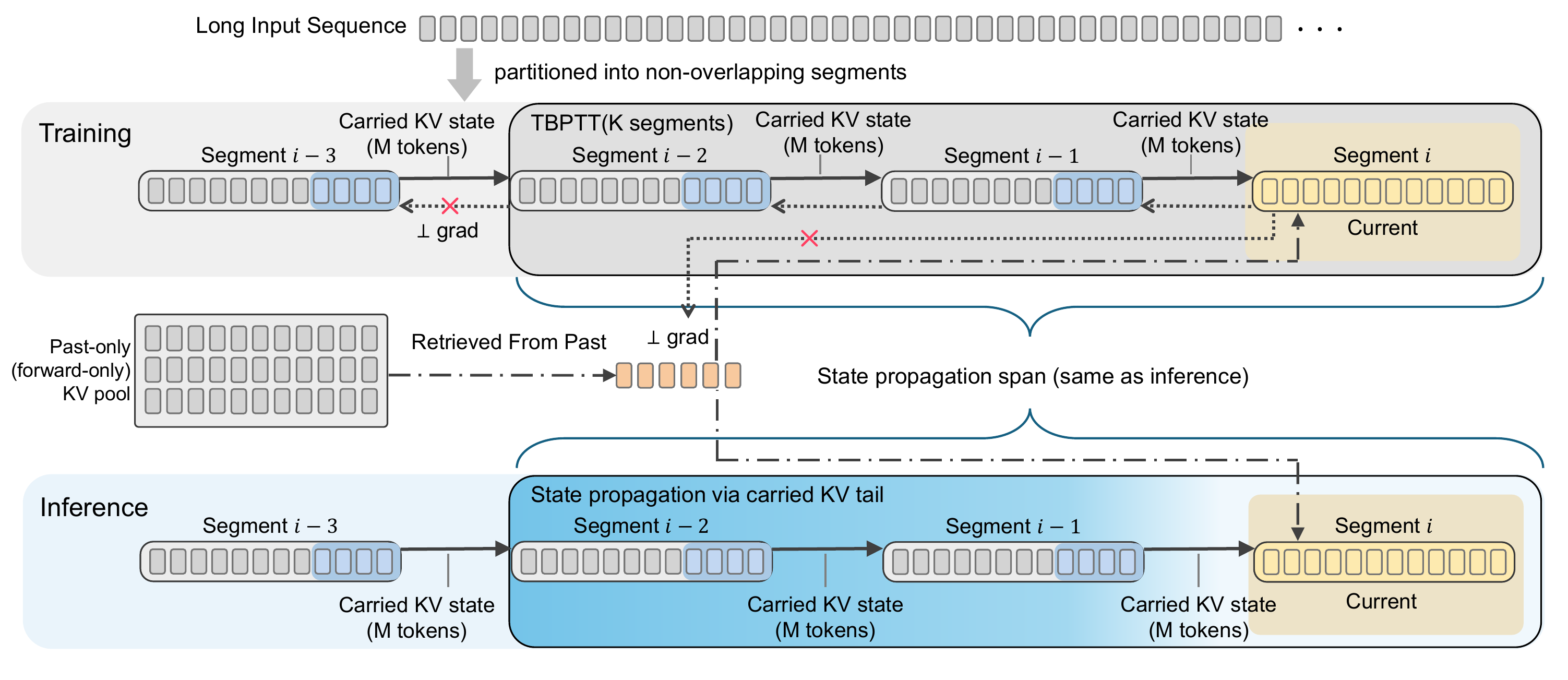}
\caption{
Training--inference consistent segmented execution.
A sequence is processed segment by segment with two cross-segment inputs:
a \emph{carried KV tail} $C_{i-1}$ (the only \emph{differentiable} state that propagates across segments)
and an optional retrieved prefix $R_{i-1}$ read from a \emph{past-only} KV pool.
During training, TBPTT with depth $K$ truncates credit assignment along the state chain (red cross),
so gradients flow through $C$ for at most $K$ segment transitions (blue brace),
while the retrieval path and earlier history are forward-only (no gradient).
}
\vspace{-7pt}
  \label{fig:execution}
\end{figure*}

\section{Related Work}
\rwpara{Execution-Level Optimizations.}
Recent work has improved the efficiency of exact attention-based long-context inference in Transformer-based models through execution- and system-level optimizations, without modifying model parameters, training objectives, or attention semantics. Representative examples include kernel-level optimizations such as FlashAttention and FlashAttention-2~\cite{DBLP:conf/nips/DaoFERR22,DBLP:conf/iclr/Dao24}, runtime systems for efficient KV cache management and scheduling such as vLLM and SARATHI~\cite{DBLP:conf/sosp/KwonLZ0ZY0ZS23,DBLP:journals/corr/abs-2308-16369}, as well as system-level approaches that explore heterogeneous memory offloading for large-scale inference, e.g., FlexGen~\cite{DBLP:conf/icml/0007ZYLRCLRSZ23}. While these methods reduce constant factors and improve throughput under moderate context lengths,
they preserve full-context attention semantics and therefore do not address the computational and memory challenges
encountered at much longer context lengths, where restricted execution is required.
\rwpara{Restricted Execution at Inference.}
Another line of work enables long-context inference by restricting attention connectivity or retained states only at inference time, while leaving training-time execution unchanged. Streaming-based methods, such as StreamingLLM~\cite{DBLP:conf/iclr/XiaoTCHL24} and LM-Infinite~\cite{DBLP:conf/naacl/HanWPX0JW24} achieve zero-shot length generalization by adapting the inference-time attention pattern of models trained on short contexts, while leaving training-time execution unchanged. In a different direction, MInference~\cite{DBLP:conf/nips/JiangLZWLAHA0L024} accelerates long-context prefill through inference-time sparse attention, while preserving dense attention during training. Restricted execution can also arise from state-level constraints rather than attention sparsification, as in ChunkKV~\cite{DBLP:journals/corr/abs-2502-00299}, which selectively compresses and retains KV states during inference without retraining. Although restrictions are imposed in different ways, spanning attention scope, sparsity, and state retention,
these approaches rely on training-time execution assumptions that differ from inference-time behavior,
leading to mismatches in attention connectivity or retained-state usage at inference time.

\rwpara{Training–Inference Alignment.}
Training--inference alignment has been explored for settings where inference adopts restricted execution by explicitly incorporating comparable constraints during training. Longformer~\cite{DBLP:journals/corr/abs-2004-05150} replaces full self-attention with a fixed sparse pattern so that training and inference operate under identical attention connectivity. Core Context Aware (CCA)~\cite{DBLP:conf/icml/ChenYZLL0T25} enforces alignment under a reduced-context computation graph by applying the mechanism consistently during adaptation. Alignment has also been studied for streaming or segmented execution: Shiftable Context~\cite{DBLP:conf/icml/RaffelPC23} enforces consistent segment structures between training and inference, while Sliding Window Attention Training~\cite{DBLP:journals/corr/abs-2502-18845} trains models directly with windowed attention to avoid degradation when such constraints are introduced only at inference time. Taken together, these approaches mitigate training--inference mismatch by enforcing consistent attention connectivity or context structure across training and inference stages.
\rwpara{Memory-Based and Recurrent Models.}
An alternative modeling paradigm extends effective context length by introducing persistent memory mechanisms or recurrent state propagation across segments. Transformer-XL~\cite{DBLP:conf/acl/DaiYYCLS19} is the earliest work to introduce segment-level recurrence with gradient truncation in Transformers, whereas our approach formalizes training–inference consistent execution semantics and separates short-term differentiable state propagation from forward-only long-range retrieval. Compressive Transformer~\cite{DBLP:conf/iclr/RaePJHL20} builds on Transformer-XL by retaining older states via learned compression. Recurrent Memory Transformer~\cite{DBLP:conf/nips/BulatovKB22} further introduces explicit memory tokens that are recurrently updated and trained to store global information across segments. Memorizing Transformers~\cite{DBLP:conf/iclr/WuRHS22} instead rely on an external key--value memory accessed via retrieval to support long-range recall. These methods rely on explicit, persistent cross-segment states to carry long-range information, whose update dynamics during training are not necessarily aligned with the execution semantics encountered at inference time. By contrast, our approach avoids persistent memory states and enforces training–inference consistency under segment-level execution.

\begin{figure*}[t]
  \centering
  \includegraphics[width=\textwidth]{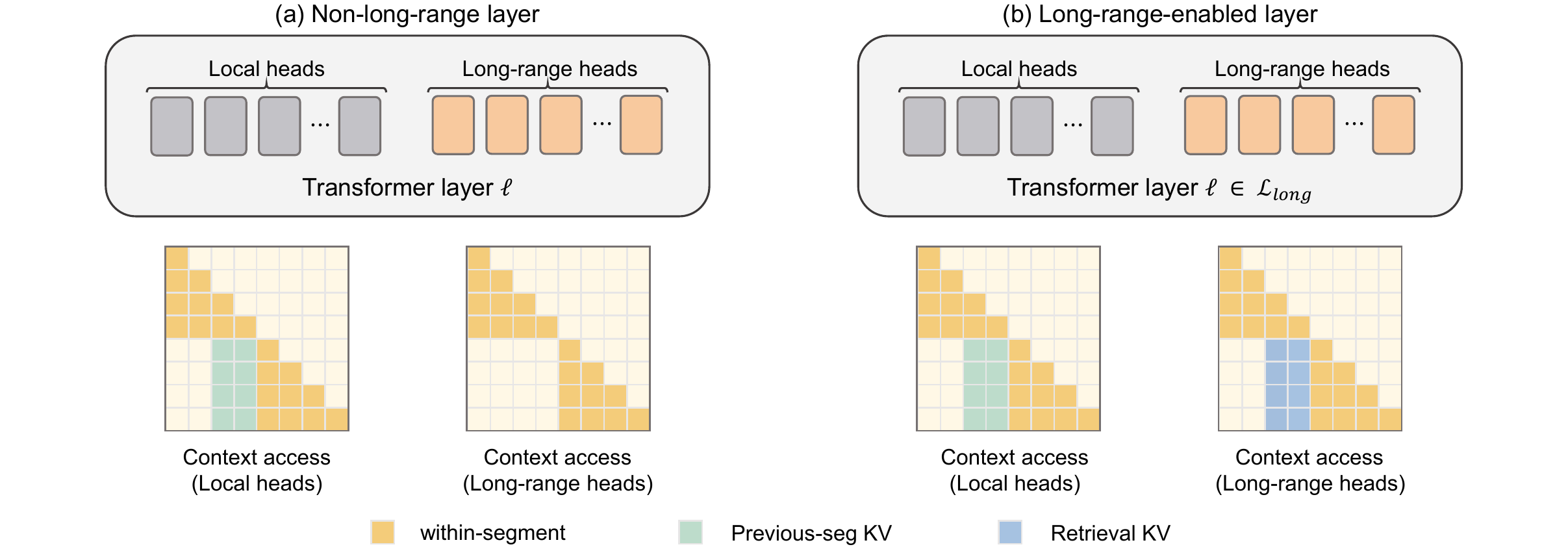}
  \caption{
  Head- and layer-sparse long-range retrieval.
  (a) In a non-long-range layer $\ell\notin\mathcal{L}_{\text{long}}$, local heads attend to within-segment tokens and the
  carried KV state from the previous segment (green), while long-range heads use within-segment causal attention only (orange).
  (b) In a long-range-enabled layer $\ell\in\mathcal{L}_{\text{long}}$, local heads remain unchanged, while long-range heads
  additionally attend to a retrieved prefix from a past-only KV pool (blue). In all cases, attention within the current segment
  remains causal.
  }
  \vspace{-8pt}
  \label{fig:layer_heads}
\end{figure*}

\section{Method}
In long-context language modeling, the computational and
memory costs of full-context attention make unrestricted
execution impractical at scale, leading to the use of segmented
or otherwise constrained attention mechanisms.
Many existing methods impose such constraints
only at inference time, while retaining full-context attention
during training, resulting in a mismatch between training and
inference execution semantics.
In this section, we describe a training--inference consistent
segmented execution framework, in which both training and
inference follow an identical segment-level execution scheme,
explicitly constraining cross-segment state propagation and
long-range access under the same execution semantics.

\subsection{Training--Inference Consistent Segmented Execution}
\label{sec:consistency}
\paragraph{Setup}
As illustrated in Figure~\ref{fig:execution}, we partition a token sequence into $N$ non-overlapping segments
$\{x^{(i)}\}_{i=1}^{N}$, where $m_i=|x^{(i)}|$ denotes the length of segment $i$.
To enable segmented inference under bounded attention, we expose a restricted
cross-segment interface state $C_i \in \mathcal{C}$ (a fixed-size carried KV interface in our implementation), together with a forward-only retrieval prefix $R_i \in \mathcal{R}$ provided by a long-range module.

\begin{definition}[Segment-level execution semantics]
\label{def:semantics}
For each segment $i$, the model runs the same forward operator at both training and inference:
\begin{equation}
(C_i, o^{(i)}) \;=\; F_\theta\big(x^{(i)}, C_{i-1}, R_{i-1}\big),
\label{eq:forward}
\end{equation}
where $o^{(i)}$ denotes outputs used to compute the LM loss, $C_{i-1}$ is the \emph{only}
differentiable cross-segment interface state available to the model, and $R_{i-1}$ is a forward-only
retrieval prefix.
The segment loss is
\begin{equation}
\ell_i(\theta; C_{i-1}, R_{i-1})
=
-\sum_{t=1}^{m_i}\log p_\theta\!\left(x_t^{(i)} \mid x_{<t}^{(i)}, C_{i-1}, R_{i-1}\right).
\label{eq:seg-loss}
\end{equation}

\end{definition}

\paragraph{Operational interpretation.}
For a sequence split into three segments $x^{(1)}, x^{(2)}, x^{(3)}$, with $C_0=R_0=\emptyset$,
the segment-level recurrence in Eq.~\eqref{eq:forward} unrolls as
\begin{align*}
(C_1,o^{(1)}) &= F_\theta(x^{(1)}, \emptyset, \emptyset),\\
(C_2,o^{(2)}) &= F_\theta(x^{(2)}, C_1, R_1),\\
(C_3,o^{(3)}) &= F_\theta(x^{(3)}, C_2, R_2).
\end{align*}
Here $o^{(i)}$ is used to predict tokens in the current segment $x^{(i)}$, while $C_i$ is the carried state produced for the next segment.
Thus, $C_{i-1}$ and $R_{i-1}$ are conditioning inputs in Eq.~\eqref{eq:seg-loss}, not prediction targets.
Past tokens are not replaced by $C$ or $R$; rather, the model accesses them through two prefix inputs:
$C$ carries recent local KV states from the previous segment, while $R$ provides forward-only retrieved KV states from earlier segments.
Section~\ref{subsec:arch} describes how these two inputs are implemented.

\paragraph{Training--Inference Forward Identity}
Eq.~\eqref{eq:forward} is used \emph{verbatim} at training and inference.
The only difference lies in how gradients are allowed to propagate across the state chain, formalized next.

\subsection{Inference-Consistent Truncated Training Objective}
\label{subsec:objective}
Under the segment-level execution semantics defined in Section~\ref{sec:consistency}, we next formalize the training objective induced by truncated state propagation.
\providecommand{\sg}{\mathrm{sg}}

\begin{definition}[$K$-truncated consistent objective]
\label{def:LK}
Let $\sg(\cdot)$ denote the stop-gradient operator (identity in forward, zero gradient in backward).
Let $b_i=\max(0,i-K-1)$ denote the truncation boundary, with $C_0=\emptyset$.
For each $i$, define a truncated state chain $\tilde C_{i-1}^{(K)}$ by unrolling the state update
for at most $K$ segments, with the boundary state $C_{b_i}$ treated as a constant:
\begin{align}
\tilde C_{b_i}^{(K)} &= \sg(C_{b_i}),
\label{eq:truncated-base}\\[0.5ex]
\tilde C_{j}^{(K)} &= \Phi_\theta\big(x^{(j)}, \tilde C_{j-1}^{(K)}, R_{j-1}\big), \quad j=b_i+1,\dots,i-1.
\label{eq:truncated-update}
\end{align}
where $\Phi_\theta$ is the state-update component induced by $F_\theta$.
We define the consistent objective
\begin{equation}
L_K(\theta)=\sum_{i=1}^{N}\ell_i\big(\theta; \tilde C_{i-1}^{(K)}, R_{i-1}\big).
\label{eq:LK}
\end{equation}
\end{definition}

\paragraph{Operational meaning of the truncation depth.}
Continuing the three-segment unrolling above, consider the representative loss on the third segment,
$\ell_3(\theta;\tilde C_2^{(K)},R_2)$.
With $K=1$, gradients propagate through the state update that produces $C_2$ from segment 2 and stop at $C_1$.
With $K=2$, the state chain is further unrolled through the update that produces $C_1$ from segment 1, and gradients stop at $C_0$.
In both cases, the forward computation remains unchanged and follows Eq.~\eqref{eq:forward};
Eqs.~\eqref{eq:truncated-base}--\eqref{eq:truncated-update} only determine how far gradients propagate along the carried-state chain.

\subsection{Exactness of TBPTT for the Stated Objective}
\label{subsec:theorem}

\begin{theorem}[Exactness of TBPTT for $L_K$]
\label{thm:tbptt-exact}
Truncated backpropagation through time (TBPTT) with truncation depth $K$ computes the exact gradient
$\nabla_\theta L_K(\theta)$.
\end{theorem}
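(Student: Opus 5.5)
The plan is to make precise what ``TBPTT with truncation depth $K$'' computes, and then to show by the chain rule that it equals $\nabla_\theta L_K$ term by term. We formalize TBPTT as follows. Run the forward recurrence Eq.~\eqref{eq:forward}. For each segment loss $\ell_i$, backpropagate through $F_\theta$ at segment $i$ and through the state updates $\Phi_\theta$ at segments $i-1,\dots,b_i+1$. At the boundary state $C_{b_i}$ the incoming adjoint is discarded. The retrieval prefixes $R_{j-1}$ are treated as constants, since they are forward-only. Linearity of differentiation gives $\nabla_\theta L_K=\sum_i \nabla_\theta \ell_i(\theta;\tilde C^{(K)}_{i-1},R_{i-1})$, and TBPTT also accumulates per-segment contributions. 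It therefore suffices to prove equality for a fixed $i$.

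\emph{Step 1 (forward agreement).} Because $\sg$ is the identity in the forward pass, induction on $j$ from $b_i$ to $i-1$ using Eqs.~\eqref{eq:truncated-base}--\eqref{eq:truncated-update} gives $\tilde C^{(K)}_j=C_j$ numerically. Hence every Jacobian of $\Phi_\theta$ and $F_\theta$ in the truncated graph is evaluated at the same points as in the actual forward pass that TBPTT differentiates.

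\emph{Step 2 (chain rule on the truncated graph).} Write $g_j=\partial \ell_i/\partial \tilde C_j^{(K)}$. Applying the multivariate chain rule to the composite map $\theta\mapsto \ell_i(\theta;\tilde C^{(K)}_{i-1}(\theta),R_{i-1})$ gives
\begin{equation*}
\nabla_\theta \ell_i=\partial_\theta \ell_i\big|_{\text{direct}}+\sum_{j=b_i+1}^{i-1} g_j^\top\,\partial_\theta \Phi_\theta\big(x^{(j)},\tilde C^{(K)}_{j-1},R_{j-1}\big),
\end{equation*}
where $g_{i-1}=\partial \ell_i/\partial C$ and $g_{j-1}=g_j^\top\partial_C\Phi_\theta$. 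The base term has no $\theta$-dependence, because $\tilde C_{b_i}=\sg(C_{b_i})$ has zero derivative. This is precisely the adjoint recursion reverse-mode autodiff executes when it runs backward from $\ell_i$ and stops at $C_{b_i}$, with the $R$ inputs detached. By Step 1 the two computations use identical Jacobians, so they coincide.

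\emph{Step 3 (summing over segments).} When TBPTT is implemented with shared forward passes and gradients accumulated over windows, we check that each $\ell_i$ receives gradient through exactly the $K$ transitions $b_i+1,\dots,i-1$. No contribution is counted twice or lost, because the gradient is additive over the loss terms.

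The main obstacle is definitional rather than computational. Practical TBPTT often detaches states only at chunk boundaries, and then early segments in a chunk see fewer than $K$ transitions. The statement therefore has to be read as per-loss truncation, matching $b_i=\max(0,i-K-1)$. For $K=1$, the case used in the paper, the two conventions coincide: detaching $C_{i-2}$ before computing $C_{i-1}$ realizes it directly. We would state this convention explicitly and verify that the implementation matches it.
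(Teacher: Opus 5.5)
Your proof is correct and follows the paper's own argument: the $\mathrm{sg}(C_{b_i})$ boundary confines the graph of each $\ell_i$ to the transitions $b_i+1,\dots,i-1$, and your recursion $g_{j-1}=g_j^\top\partial_C\Phi_\theta$ with accumulated terms $g_j^\top\partial_\theta\Phi_\theta$ is exactly the adjoint/Jacobian expansion of Appendix~\ref{app:chain-rule}; your forward-agreement step and your explicit per-loss reading of TBPTT make precise what the paper leaves implicit.

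One caution on your closing aside, which the theorem does not need: $C_{i-1}$ and $o^{(i-1)}$ come out of the same forward pass of segment $i-1$, so detaching $C_{i-2}$ at the input of that pass also cuts the path from $\ell_{i-1}$ into segment $i-2$ that $L_1$ requires. Hence for $N>2$, per-loss $K{=}1$ truncation is obtained neither by detaching in a single forward sweep nor by chunked TBPTT; these agree with $L_1$ only when $N\le 2$, where $b_i=0$ throughout. Realizing it exactly requires either recomputing segment $i-1$ or taking two separate vector--Jacobian products through it: one from $o^{(i-1)}$ that continues to $C_{i-2}$, and one from $C_{i-1}$ that stops there.
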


\begin{proof}[Proof sketch]
By Definition~\ref{def:LK}, the stop-gradient boundary $\sg(C_{b_i})$ removes all gradient paths
across more than $K$ state transitions in the computational graph of each loss term.
TBPTT with truncation depth $K$ performs backpropagation on exactly this truncated computational graph,
therefore it yields $\nabla_\theta L_K(\theta)$.
A detailed chain-rule derivation (Jacobian product / adjoint recursion) is provided in
Appendix~\ref{app:tbptt} (see Appendix~\ref{app:chain-rule}).
\end{proof}

\begin{corollary}[Training--inference alignment]
\label{cor:alignment}
Since training and inference share identical forward execution in Eq.~\eqref{eq:forward},
and TBPTT computes the exact gradient of the inference-consistent objective in Eq.~\eqref{eq:LK},
the resulting training procedure is fully aligned with inference under the same execution semantics.
\end{corollary}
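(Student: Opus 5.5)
The plan is to make ``fully aligned'' precise and then derive it from two ingredients already available: the forward identity of Definition~\ref{def:semantics} (Eq.~\eqref{eq:forward} is used verbatim in both regimes) and Theorem~\ref{thm:tbptt-exact}.

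\textbf{What ``aligned'' will mean.} I will say training is aligned with inference if three conditions hold.
\begin{enumerate}
\item The scalar objective whose gradient drives the parameter update has, at every $\theta$, the same value as the loss obtained by running inference segment by segment. Inference here means the recursion $(C_i,o^{(i)})=F_\theta(x^{(i)},C_{i-1},R_{i-1})$ started from $C_0=R_0=\emptyset$, scored with $\sum_i \ell_i(\theta;C_{i-1},R_{i-1})$.
\item Each term of that objective depends only on quantities available at inference time: $x^{(i)}_{<t}$, $C_{i-1}$ and $R_{i-1}$.
\item The update direction actually computed is the exact gradient of that objective, not an approximation to it.
\end{enumerate}

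\textbf{Step 1: forward values coincide.} The central claim is that, in forward value, $\tilde C^{(K)}_{j}=C_j$ for all $b_i\le j\le i-1$. I will prove this by induction along the truncated chain.

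The base case is Eq.~\eqref{eq:truncated-base}: $\mathrm{sg}$ is the identity in the forward pass, so $\tilde C^{(K)}_{b_i}=C_{b_i}$.

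For the inductive step I use Eq.~\eqref{eq:truncated-update}. Since $\Phi_\theta$ is by definition the state component of $F_\theta$, and the inputs $x^{(j)}$ and $R_{j-1}$ are the same as at inference, the induction hypothesis gives $\tilde C^{(K)}_j=\Phi_\theta(x^{(j)},C_{j-1},R_{j-1})=C_j$.

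Substituting into Eq.~\eqref{eq:LK} then gives $L_K(\theta)=\sum_i\ell_i(\theta;C_{i-1},R_{i-1})$. The right-hand side is exactly the inference loss, for every $K$, which establishes condition~1.

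\textbf{Step 2: no leakage.} By Eq.~\eqref{eq:seg-loss}, each $\ell_i$ conditions only on $x^{(i)}_{<t}$, $C_{i-1}$ and $R_{i-1}$. All of these are produced by the same forward operator at inference time, and $R$ is read from a past-only pool, so condition~2 holds. The point is that no full-context attention term enters the objective.

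\textbf{Step 3: the update is the exact gradient.} Condition~3 is exactly Theorem~\ref{thm:tbptt-exact}: TBPTT with depth $K$ returns $\nabla_\theta L_K(\theta)$. Combining the three steps, training performs exact gradient steps on an objective that equals the inference-time loss in value and uses only inference-available inputs.

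\textbf{Main obstacle.} The delicate point is that $L_K$ and the inference loss agree in value but not as functions for gradient purposes, because $\mathrm{sg}$ changes derivatives. So $\nabla L_K$ is not the full gradient of the inference loss.

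I will address this explicitly rather than overclaim. The corollary asserts alignment of execution semantics and of the optimized objective's forward definition. It does not assert equality with the untruncated gradient. I will therefore state that the only training/inference discrepancy is the chosen credit-assignment depth $K$, which is a property of the optimizer and not of the model's executed computation. I will also note that as $K\ge N-1$ the boundaries satisfy $b_i=0$ for all $i$, so $\mathrm{sg}(C_0)=\emptyset$ is vacuous and the gradient becomes the full one.
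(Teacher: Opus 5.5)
Your proposal is correct and takes the same route as the paper: the corollary is the forward identity of Eq.~\eqref{eq:forward} (stop-gradient is the identity in the forward pass, so $L_K$ equals the inference-time loss in value) combined with Theorem~\ref{thm:tbptt-exact}, and your induction along the truncated chain and your caveat that $\nabla_\theta L_K$ is a truncated rather than full gradient just make explicit what the paper states informally. One small overstatement in your closing remark: taking $K\ge N-1$ recovers full backpropagation only along the carried-state chain $\{C_i\}$, because the retrieval prefix $R_{i-1}$ stays stop-gradiented by construction (Lemma~\ref{lem:nograd-retrieval}), so the result is not the full gradient of the inference loss if $R$ is regarded as a function of $\theta$.
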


\paragraph{Forward-Only Long-Range Retrieval}
In our system, the long-range module constructs $R_{i-1}$ from detached past KV states and uses it only in the forward pass,
so it does not introduce additional cross-segment credit assignment paths.
We formalize this as Lemma~\ref{lem:nograd-retrieval} in Appendix~\ref{app:nograd}.

\subsection{Model Architecture: Head- and Layer-Sparse Long-Range Retrieval}
\label{subsec:arch}

In Sections~\ref{sec:consistency}--\ref{subsec:theorem}, we defined a training--inference consistent segmented
execution semantics and proved that TBPTT computes the exact gradient of the corresponding consistent objective.
We now instantiate the abstract operator $F_\theta(\cdot)$ in Eq.~\eqref{eq:forward} with a Transformer decoder
architecture that (i) implements the differentiable cross-segment interface state $C_i$ via a fixed-size carried KV tail,
and (ii) optionally augments context using a forward-only retrieval prefix $R_i$ in a sparse manner across layers and heads.
Figure~\ref{fig:layer_heads} illustrates the resulting context-access patterns.

\paragraph{Head and Layer Partition}
Consider an $L$-layer Transformer decoder with $H$ attention heads per layer.
For each layer $\ell$, we partition heads into two disjoint groups:
local heads $\mathcal{H}_{\text{local}}$ and long-range heads $\mathcal{H}_{\text{long}}$,
and denote $\alpha := |\mathcal{H}_{\text{local}}|/H$.
This head-level sparsity follows observations from recent mechanistic and systems analyses showing that
only a small subset of heads exhibits retrieval-like long-context behavior, whereas the majority behaves
more like streaming/local heads~\cite{DBLP:conf/iclr/WuWX0F25, DBLP:conf/iclr/XiaoTZGYTF025}.
We further select a small subset of layers $\mathcal{L}_{\text{long}}\subseteq\{1,\dots,L\}$ as long-range-enabled layers,
where long-range heads additionally consume a retrieved prefix.
At the layer level, prior work reports structured redundancy in Transformer attention layers,
indicating that many attention layers can be removed with only limited performance degradation~\cite{DBLP:journals/corr/abs-2406-15786}.
In our default configuration, we enable long-range retrieval on layers
$\mathcal{L}_{\mathrm{long}}=\{6,8,11,18\}$ and use a prior-based long-range head group
$\mathcal{H}_{\mathrm{long}}=\{0,1,2,4,9,12,14,15,16,18,19,22,23,26,29,30\}$,
with the remaining heads assigned to $\mathcal{H}_{\mathrm{local}}$.
We ablate alternative head groupings in Appendix~\ref{app:head-grouping}.
All layer and head indices follow the zero-indexed implementation convention.

\paragraph{Local Continuity Channel: Carried KV Interface $\{C_i\}$}
For each segment $x^{(i)}$, after computing per-layer key/value states, we form the cross-segment interface state $C_i$
as a fixed-size carried KV tail of length $M$ produced by local heads.
Concretely, for each layer $\ell$, we cache the last $M$ key/value vectors from $\mathcal{H}_{\text{local}}$ and expose them
to the next segment as the only differentiable cross-segment state.
When processing segment $i$, local heads attend to the concatenation of this carried KV tail (from segment $i{-}1$) and the
within-segment KV of segment $i$ (Figure~\ref{fig:layer_heads}, green+orange).
This realizes the interface state $C_{i-1}$ used in Eq.~\eqref{eq:forward}, and is exactly the state channel whose gradient
propagation is aligned with inference (Corollary~\ref{cor:alignment}).

\paragraph{Long-Range Channel: Forward-Only Retrieval Prefix $\{R_i\}$}
To capture dependencies beyond the carried KV horizon, we introduce an external past-only KV pool used by long-range-enabled
layers. For each $\ell\in\mathcal{L}_{\text{long}}$, we maintain a past-only pool that stores historical key/value vectors computed
from long-range heads and supports retrieving a fixed-size prefix of length $R$ via top-$k$ similarity search.
In our implementation, we do not apply eviction; the pool retains historical KV states for the selected long-range
heads and long-range-enabled layers.
Before processing segment $i$, we construct a compact query summary from the tail of segment $i{-}1$ (using long-range heads)
and retrieve $R$ key/value vectors from the pool as the retrieval prefix $R_{i-1}$.
In layers $\ell\in\mathcal{L}_{\text{long}}$, long-range heads attend to the concatenation of this retrieved prefix and the
within-segment KV of segment $i$ (Figure~\ref{fig:layer_heads}, blue+orange).
In layers $\ell\notin\mathcal{L}_{\text{long}}$, long-range heads simply reduce to standard within-segment causal attention only
(Figure~\ref{fig:layer_heads}, orange).
KV states inserted into the pool are detached for future retrieval, and retrieval itself is forward-only,
ensuring consistency with Sections~\ref{sec:consistency}--\ref{subsec:theorem}. Formal analysis and implementation details are provided in Lemma~\ref{lem:nograd-retrieval} and Appendix~\ref{app:retrieval}.

\paragraph{RoPE Re-indexing for Prefix Concatenation}
Both the carried KV tail and the retrieval prefix are treated as a prefix preceding the current segment.
To preserve correct positional encoding under this concatenation, we re-index RoPE positions by assigning prefix positions
$\{0,\dots,P{-}1\}$ (with $P=M$ for the carried KV tail and $P=R$ for the retrieval prefix)
and shifting the current segment positions by an offset of $P$.
This allows reuse of standard RoPE attention while integrating prefix KV as preceding context.
We provide exact position assignments and the corresponding RoPE application details in Appendix~\ref{app:rope}.

Finally, by enabling retrieval only for a subset of layers and heads, the architecture preserves the local execution semantics
used in Sections~\ref{sec:consistency}--\ref{subsec:theorem}, while limiting the attention-visible compute and activation-memory overhead of long-range context access;
we quantify the time/memory benefits in Section~\ref{sec:complexity}.

\begin{figure*}[t]
  \centering
  \includegraphics[width=\textwidth]{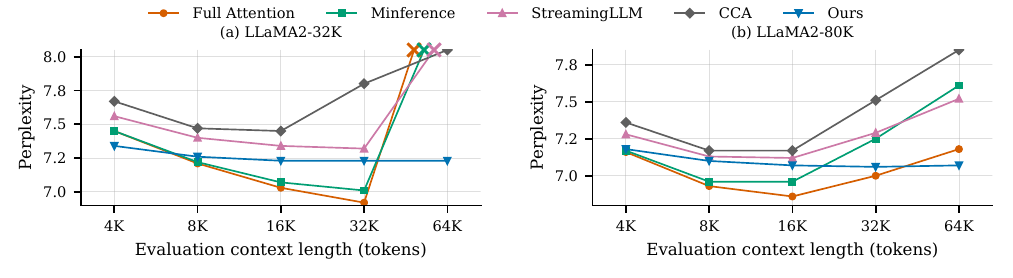}
  \caption{
  Perplexity on PG19 test under varying evaluation context lengths. Results are reported for LLaMA2-32K and LLaMA2-80K.
  }
    \vspace{-7pt}
  \label{fig:ppl}
\end{figure*}

\subsection{Computational Cost under Segment-Level Restricted Execution}
\label{sec:complexity}

We analyze the computational and memory cost of the proposed architecture (Section~\ref{subsec:arch})
under the segment-level execution semantics in Definition~\ref{def:semantics}.

We consider a sequence of total length $T=\sum_{i=1}^{N} |x^{(i)}|$, where $\{x^{(i)}\}_{i=1}^{N}$ are the non-overlapping
segments used in Eq.~\eqref{eq:forward}.
For clarity, we assume a constant segment length $|x^{(i)}|=S$, hence $N=T/S$.
The differentiable cross-segment interface state $C_i$ is a carried KV tail of fixed length $M$,
and the forward-only retrieval prefix $R_i$ has a fixed length $R$.
Each layer has $H$ attention heads, partitioned into local heads $\mathcal H_{\text{local}}$
and long-range heads $\mathcal H_{\text{long}}$ (Section~\ref{subsec:arch}),
with $\alpha := |\mathcal H_{\text{local}}|/H$.
Only layers in $\mathcal L_{\text{long}}$ consume the retrieval prefix; denote $\beta := |\mathcal L_{\text{long}}|/L$.

\paragraph{Time Complexity.}
For a segment of length $S$, local heads attend to a context of size $S+M$ (within-segment plus carried tail).
Long-range heads attend to $S$ in layers $\ell\notin\mathcal L_{\text{long}}$, and to $S+R$ in layers
$\ell\in\mathcal L_{\text{long}}$.
Abstracting the dominant attention cost in each layer as $\mathcal O(S\cdot L_{\mathrm{ctx}})$ per head,
the effective context length per token can be summarized as
\[
S + \alpha M + \beta(1-\alpha)R.
\]
Therefore, the total attention time over the full sequence is
\[
\mathcal{O}\!\left(
T \cdot \bigl(S + \alpha M + \beta(1-\alpha)R \bigr)
\right),
\]
(up to a constant factor depending on the number of layers $L$ and heads $H$).

\paragraph{Retrieval Overhead.}
In addition to attention, retrieval incurs an extra search cost over the past-only KV pool.
In our implementation, the pool is not capacity-bounded: it retains historical KV states only for the selected long-range
heads and long-range-enabled layers.
Consequently, the searchable pool grows with the processed history, but this growth is reduced by the sparsity factor
$\beta(1-\alpha)$ over layers and heads.
This retrieval cost is separate from the attention cost above: after retrieval, each long-range head attends only to a
fixed-size prefix of length $R$, so the attention-visible context remains bounded by
$S+\alpha M+\beta(1-\alpha)R$.

\paragraph{Memory Complexity.}
Under segment-level execution, the peak KV cache for attention is dominated by the current segment KV and the prefix KV.
Across all layers, local heads attend to an attention-visible KV length of $S+M$.
For long-range heads, the attention-visible KV length is $S+R$ in layers
$\ell\in\mathcal L_{\text{long}}$ and $S$ in layers
$\ell\notin\mathcal L_{\text{long}}$.
Thus, the peak attention-visible KV length can be summarized as
\[
\bar{L}_{\mathrm{KV}} = S + \alpha M + \beta(1-\alpha)R,
\]
implying a bounded per-segment attention-memory footprint that does not scale with $T$.
The persistent retrieval pool is separate from this attention-visible KV footprint.
Since we retain historical KV states for the selected long-range heads and long-range-enabled layers without eviction,
the pool memory grows linearly with the processed sequence length, with an effective sparsity factor $\beta(1-\alpha)$.
Importantly, these stored states are forward-only and do not create additional gradient-carrying activation memory.

\begin{table*}[t]
\centering

\caption{Comparison of different methods on LongBench-E~\cite{bai2024longbench}.
TTFT (time to first token) measures the latency of prompt processing before token generation, while Mem.\ reports the peak GPU memory allocated during the prefill stage; lower values indicate better performance for both TTFT and Mem.
TTFT and Mem.\ are measured with a 16K-token prefill for LLaMA2-7B-32K and a 32K-token prefill for LLaMA2-7B-80K.
All experiments are conducted on A100 GPUs.}

\label{tab:longbench}
\resizebox{\textwidth}{!}{%
\begin{tabular}{l|cccccc|c|cc}
\hline
Methods & S. QA & M. QA & Sum. & FS. Learning & Synthetic & Code & ~Avg.~ & TTFT (s) & Mem. (GB) \\ \hline

\textit{LLaMA2-7B-32K~\small{(Vanilla Self-Attention)}} & 2.82 & \underline{6.09} & 3.09 & \underline{65.36} & 0.50 & \textbf{60.91} & \underline{23.13} & 1.62  & 23.61 \\
CCA-attention~\cite{DBLP:conf/icml/ChenYZLL0T25} & \textbf{3.36} & 5.64 & \underline{4.60} & 55.41 & \textbf{2.12} & 55.56 & 21.12 & 1.79  & 28.08 \\
StreamingLLM~\cite{DBLP:conf/iclr/XiaoTCHL24} & 1.93 & 5.22 & 3.95 & 61.93 & 0.25 & 58.12 & 21.90 & \underline{1.59}  & 22.19 \\
DuoAttention~\cite{DBLP:conf/iclr/XiaoTZGYTF025} & 2.85 & 6.30 & 3.45 & 64.82 & \underline{0.67} & 59.91 & 23.00 & \textbf{1.53}  & \textbf{18.15}  \\
MInference~\cite{DBLP:conf/nips/JiangLZWLAHA0L024} & \underline{2.87} & \textbf{6.39} & 3.56 & \textbf{65.44} & 0.00 & \underline{60.22} & 23.08 & 2.84  & 22.19 \\
\rowcolor{pink!25}  (Ours) & 1.90 & 4.07 & \textbf{12.10} & 62.30 & 0.28 & 58.81 & \textbf{23.24} & 1.70  & \underline{18.56} \\ \hline
\textit{LLaMA2-7B-80K \small{(Vanilla Self-Attention)}} & 6.89 & 5.93 & \underline{11.35} & \textbf{66.51} & 0.77 & 48.83 & \underline{23.38} & 4.13 & 34.67 
 \\
StreamingLLM~\cite{DBLP:conf/iclr/XiaoTCHL24} & 3.48 & 3.76 & 6.60 & 62.35 & 0.17 & \underline{53.02} & 21.56 & \textbf{3.07}  & 31.77  \\
CCA-attention~\cite{DBLP:conf/icml/ChenYZLL0T25} & 6.22 & \underline{7.86} & 6.11 & 58.2 & \textbf{1.73} & 51.75 & 21.98 & 3.88  & 43.64 \\
DuoAttention~\cite{DBLP:conf/iclr/XiaoTZGYTF025} & 6.57 & 6.12 & \textbf{11.86} & 63.68 & 0.89 & 48.50 & 22.94 & 3.79  & \underline{23.66} \\
MInference~\cite{DBLP:conf/nips/JiangLZWLAHA0L024} & \underline{6.93} & 5.96 & 11.19 & \underline{66.42} & \underline{1.01} & 48.60 & 23.35 & 4.13  & 31.77 \\
\rowcolor{pink!25} (Ours) & \textbf{7.58} & \textbf{8.56} & 8.70 & 63.76 & 0.42 & \textbf{56.02} & \textbf{24.17} & \underline{3.49}  & \textbf{19.06} \\ \hline
\end{tabular}%
}
\end{table*}

\section{Experiments}
\subsection{Experimental Setup}
\rwpara{Benchmarks and Tasks.}
We evaluate long-context modeling through a set of complementary evaluation views:
(1) language modeling perplexity (PPL) across varying context lengths, which characterizes stability and degradation patterns as the effective context length increases;
(2) downstream performance on the LongBench benchmark~\cite{bai2024longbench}, which comprises a diverse set of long-context tasks requiring cross-segment information integration and long-range dependency modeling, providing a task-level evaluation of long-context generalization; and
(3) controlled length-scaling analyses using the CWE and FWE tasks from the RULER benchmark~\cite{hsieh2024ruler}.
RULER offers a setting in which context length can be systematically extended, enabling focused stress testing of robustness and dependency modeling behavior.
\rwpara{Models and Backbones.}
Our primary experiments are conducted on LLaMA~2 models~\cite{touvron2023llama} with extended context lengths of 32K and 80K~\cite{DBLP:conf/icml/FuPNYHK024}, which are widely used settings in long-context evaluation.
We also report results on LLaMA~3.1 for LongBench~v2~\cite{bai2025longbench} to verify whether similar trends hold on a more recent backbone, with detailed results provided in appendix~\ref{app:add_exp} for completeness.
\rwpara{Baselines.}
We compare our method against representative long-context modeling approaches with different execution constraints.
These include full self-attention as an unconstrained reference,
Core Context Aware (CCA)~\cite{DBLP:conf/icml/ChenYZLL0T25} (training--inference aligned via context compression),
MInference~\cite{DBLP:conf/nips/JiangLZWLAHA0L024} (inference-only sparse attention),
StreamingLLM~\cite{DBLP:conf/iclr/XiaoTCHL24} (sliding-window attention with sinks),
and DuoAttention~\cite{DBLP:conf/iclr/XiaoTZGYTF025} (head-level separation for selective long-range access).

\rwpara{Implementation Details.}
For methods whose execution requires training--inference alignment (our method and CCA),
we fine-tune the pretrained models using a standard language modeling objective to match
their inference-time execution semantics.
All alignment-based methods are fine-tuned under identical settings.
All other baseline methods are evaluated under their standard pretrained configurations
without additional training.
For StreamingLLM, we adopt the implementation provided in MInference~\cite{DBLP:conf/nips/JiangLZWLAHA0L024}.
Additional implementation details are provided in the Appendix~\ref{app:add_imp}.
Our code is available at:
\href{https://github.com/sxp11/Training-Inference-Consistent-Segmented-Execution}{link}.

\subsection{Main Results}

\begin{table*}[t]
\centering
\caption{Performance as context length increases from 4K to 64K on RULER tasks~\cite{hsieh2024ruler}.
CWE and FWE denote the Common Words Extraction and Frequent Words Extraction tasks, respectively.
Scores report recall-based accuracy at each context length.
“–” indicates zero performance, i.e., failure to generalize to the corresponding length.
Avg* reports the average over 4K–32K, excluding the 64K setting.}
\label{tab:ruler}
\resizebox{\textwidth}{!}{%
\begin{tabular}{
l |
>{\columncolor{airforceblue}}c c
>{\columncolor{airforceblue}}c c
>{\columncolor{airforceblue}}c c
>{\columncolor{airforceblue}}c c
>{\columncolor{airforceblue}}c c
|
>{\columncolor{airforceblue}}c c
}
\toprule
Methods &
\multicolumn{2}{c}{4K} &
\multicolumn{2}{c}{8K} &
\multicolumn{2}{c}{16K} &
\multicolumn{2}{c}{32K} &
\multicolumn{2}{c}{64K} &
\multicolumn{2}{c}{Avg*} \\
\hline

 & CWE & FWE & CWE & FWE & CWE & FWE & CWE & FWE & CWE & FWE & CWE & FWE \\
\hline

\textit{LLaMA2-7B-32K~\small{(Vanilla Self-Attention)}} &
60.65 & 48.00 &
39.25 & 35.83 &
20.05 & 47.50 &
11.80 & 34.00 &
-- & -- &
32.94 & 41.33 \\

StreamingLLM~\cite{DBLP:conf/iclr/XiaoTCHL24} &
61.80 & 47.83 &
20.45 & 38.83 &
19.35 & 43.00 &
9.50 & 35.83 &
-- & -- &
27.78 & 41.37 \\

MInference~\cite{DBLP:conf/nips/JiangLZWLAHA0L024} &
60.35 & 48.67 &
39.40 & 36.17 &
20.15 & 46.17 &
11.85 & 36.00 &
-- & -- &
32.94 & 41.75 \\

CCA-attention~\cite{DBLP:conf/icml/ChenYZLL0T25} &
41.10 & 45.67 &
36.25 & 39.50 &
15.90 & 29.83 &
6.35 & 12.83 &
-- & -- &
24.90 & 31.96 \\

DuoAttention~\cite{DBLP:conf/iclr/XiaoTZGYTF025} &
60.15 & 48.83 &
38.80 & \textbf{42.67} &
18.15 & \textbf{48.33} &
12.20 & 33.83 &
-- & -- &
32.33 & 43.42 \\

Ours &
\textbf{70.60} & \textbf{53.83} &
\textbf{54.15} & 38.67 &
\textbf{41.45} & 44.83 &
\textbf{19.35} & \textbf{38.17} &
2.00 & 34.17 &
\textbf{46.39} & \textbf{43.88} \\
\hline
\end{tabular}%
}
\end{table*}

\begin{figure*}[t]
  \centering
  \includegraphics[width=\textwidth]{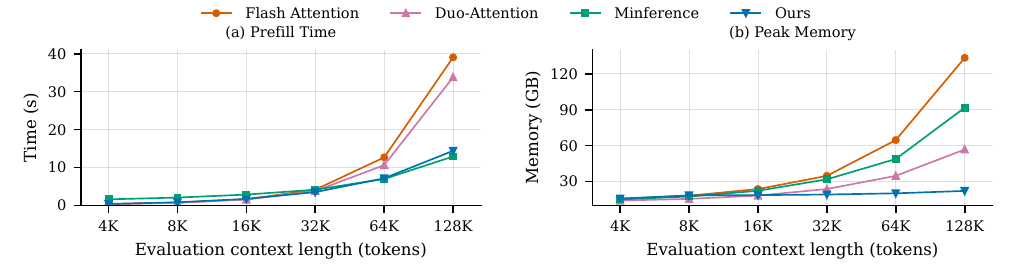}
  \caption{Prefill latency and peak GPU memory under increasing evaluation context
lengths for LLaMA2-7B-32K.
(a) Prefill time measures the forward-pass latency (seconds) to process the entire
input prompt.
(b) Peak memory (GB) reports the maximum GPU allocated memory during the prefill
stage.}
    \vspace{-6pt}
  \label{fig:prefill_scaling}
\end{figure*}

\begin{figure}[t]
  \centering
  \includegraphics[width=\columnwidth]{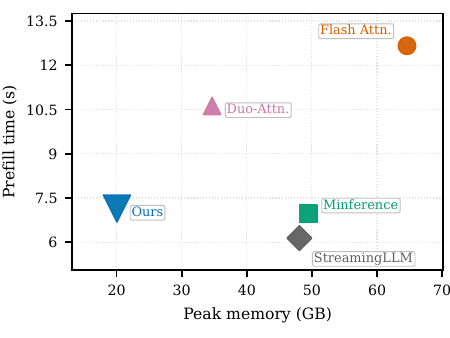}
  \caption{
    Prefill latency versus peak GPU memory at 64K context length for LLaMA2-7B-32K.
  }
  \vspace{-9pt}
  \label{fig:time_memory_tradeoff}
\end{figure}

\rwpara{Perplexity under Increasing Context Lengths.}
We evaluate perplexity on the PG19 test set under increasing evaluation context lengths.
Figure~\ref{fig:ppl} reports results for both LLaMA2-32K and LLaMA2-80K.
Evaluation at 64K exceeds the training context length for LLaMA2-32K,
and represents a challenging long-context regime in terms of modeling stability
across methods.

Across commonly used medium-length contexts (4K–32K), different methods exhibit distinct perplexity trends as context length increases.
Methods based on sparse, windowed, or compressed-context mechanisms (e.g., MInference, StreamingLLM, and CCA) often show non-monotonic behavior with noticeable fluctuations.
Full attention remains relatively smooth at moderate lengths but degrades as context length grows.

In contrast, our method shows a smoother perplexity trend across the evaluated context lengths,
with perplexity increasing gradually as context length grows,
and does not exhibit abrupt spikes when the evaluation length exceeds the training context.

\rwpara{Downstream Performance on LongBench-E.}
Table~\ref{tab:longbench} reports results on LongBench-E across two long-context backbones.
Overall, our method attains the highest average score under both settings,
with improvements observed at both 32K and 80K across multiple task categories.

The largest improvement is observed on summarization, where our method outperforms all baselines at 32K (e.g., 12.1 vs.\ 3.09--4.60), while maintaining comparable performance on question answering tasks.
At longer contexts, these gains extend to multi-document QA, where our method attains an average score of 8.56 at 80K, contributing to the highest overall average.

These results reflect the effect of enforcing training--inference consistency at the segment level,
and correspond to improved robustness in long-context modeling across both local generation and cross-segment reasoning tasks.
All improvements are obtained while preserving the efficiency advantages of segment-level execution,
with lower memory usage and a favorable latency--memory trade-off compared to full attention.
A detailed efficiency analysis is provided in Section~\ref{sec:efficiency}.
Results on a more recent backbone (LLaMA~3.1) and LongBench~v2
are reported in Appendix~\ref{app:add_exp} for completeness.

\rwpara{Length Generalization on RULER.}
We evaluate length generalization on the RULER benchmark by increasing the context length from 4K to 64K in Table~\ref{tab:ruler}.
Within the effective context range of LLaMA2-7B-32K (4K–32K), performance degrades for all methods as context length grows; however, our method consistently achieves higher and more stable accuracy on both CWE and FWE, resulting in the best average performance (Avg*: 46.39 / 43.88) among all baselines.
When the context length is further extended beyond the training range to 64K, most existing methods collapse to zero performance, whereas our method retains non-zero accuracy, exhibiting a smoother degradation behavior rather than an abrupt failure.
These results indicate that under strictly constrained execution, our method demonstrates stronger robustness to context length scaling, including beyond the nominal training context range.

\subsection{Prefill Latency and Memory Efficiency}
\label{sec:efficiency}
We analyze efficiency in long-context settings by examining prefill latency and peak GPU memory consumption as the evaluation context length increases.
As shown in Figure~\ref{fig:prefill_scaling}, while both latency and memory grow with context length for all methods, our approach exhibits substantially more gradual scaling behavior than existing baselines, particularly at longer contexts.

Figure~\ref{fig:time_memory_tradeoff} further illustrates this effect at a representative context length of 64K.
Our method maintains a low peak memory footprint and moderate prefill latency,
whereas other methods either incur significantly higher memory usage
or suffer from increased latency.
Overall, these results indicate that our approach achieves a more balanced latency--memory trade-off during long-context prefill, rather than optimizing a single efficiency dimension in isolation.

\subsection{Ablation: Validating Training--Inference Consistency}
\label{sec:ablation}

Table~\ref{tab:tbptt_ablation} studies the effect of training--inference alignment
and the TBPTT truncation depth $K$ under our segment-level execution semantics.
Here, \emph{Misaligned} trains the model with standard full-context attention,
but evaluates it under our segmented execution, resulting in mismatched forward semantics.
Removing alignment (\emph{Misaligned}) leads to a substantial performance drop across
all LongBench(-E) categories, indicating the effect of optimizing a training
objective consistent with inference execution.
Notably, increasing the TBPTT depth from $K{=}1$ to $K{=}2$ does not improve performance
and can slightly degrade it.
This contrasts with classical recurrent models, and reflects a key difference in our framework,
where the carried KV interface constitutes the only differentiable cross-segment state.
These results empirically support $K{=}1$ as the most aligned and effective choice
under our training--inference consistent segmented execution.
Additional ablation results and analyses are provided in Appendix~\ref{app:add_exp}.

\setlength{\textfloatsep}{16pt}
\begin{table}[!t]
  \centering
  \caption{Effect of training--inference alignment and TBPTT depth on LongBench-E category scores.
The ablation evaluates the necessity of alignment and the impact of truncation depth.}
  \label{tab:tbptt_ablation}
  \setlength{\tabcolsep}{4.5pt} 
  \renewcommand{\arraystretch}{1.15} 
  \resizebox{\columnwidth}{!}{
  \begin{tabular}{lccccccc}
    \toprule
    Method & S.QA & M.QA & Sum & FS. & Syn. & Code & Avg \\
    \midrule
    Aligned (TBPTT=1) & 7.58 & 8.56 & 8.70 & 63.76 & 0.42 & 56.02 & \textbf{24.17} \\
    Misaligned        & 3.52 & 2.79 & 6.80 & 36.97 & 0.00 & 21.37 & 11.91 \\
    Aligned (TBPTT=2) & 8.88 & 8.48 & 9.36 & 64.42 & 0.58 & 52.71 & 24.07 \\
    \bottomrule
  \end{tabular}}

\end{table}

\section{Conclusion}
We propose a training–inference consistent segmented execution framework for long-context language modeling under constrained execution.
The framework treats segmented execution as a shared modeling assumption between training and inference, and aligns state evolution and credit assignment through a controlled cross-segment interface.
Under this formulation, we show that when cross-segment recursion is strictly bounded, truncated backpropagation computes the exact gradient of a consistent training objective, preventing reliance on information unavailable at inference time.
Empirically, the proposed approach performs competitively across diverse long-context benchmarks
and achieves favorable latency--memory trade-offs, while substantially improving scalability in extreme long-context regimes.

\section*{Acknowledgements}

This work was funded by National Natural Science Foundation of China (Grant No. 62366036), Outstanding Youth Fund Project of Inner Mongolia Autonomous Region (Grant No. 2025JQ010), Program for Young Talents of Science and Technology in Universities of Inner Mongolia Autonomous Region (Grant No. NJYT24033), Major Science and Technology Projects of Inner Mongolia Autonomous Region (Grant No. 2025ZDSF0029), Key R\&D and Achievement Transformation Program of Inner Mongolia Autonomous Region (Grant No. 2025YFDZ0011, 2025YFDZ0026, 2025YFSH0021, 2025YFHH0073), Hohhot Science and Technology Project (Grant No. 2023-Zhan-Zhong-1).

\section*{Impact Statement}
This work improves the scalability of long-context language modeling by proposing a training--inference consistent segmented execution framework that reduces memory usage and inference latency while maintaining strong performance.
The primary positive impact is enabling more accessible and energy-efficient deployment of long-context LLMs for applications such as document understanding and long-form summarization.
As a general-purpose efficiency improvement, the method may also lower the cost of deploying long-context LLMs, which could amplify existing risks associated with misuse of such models, including misinformation generation or handling of sensitive content.
We encourage responsible deployment through standard safeguards, including privacy-preserving data practices, access control for sensitive use cases, and content safety measures.


\bibliography{example_paper}
\bibliographystyle{icml2026}

\newpage
\appendix
\onecolumn

\section{Additional Theory: Exactness of TBPTT Under Consistent Segmented Objectives}
\label{app:tbptt}

\subsection{Computational-Graph Equivalence Viewpoint}
\label{app:graph-proof}

This subsection elaborates the proof sketch in Theorem~\ref{thm:tbptt-exact}.
Definition~\ref{def:LK} inserts the stop-gradient boundary
$\sg(C_{b_i})$ into the state chain, where $b_i=\max(0,i-K-1)$.
This eliminates any gradient path that traverses more than $K$
consecutive state transitions.
Thus, for each loss term $\ell_i(\theta; \tilde C_{i-1}^{(K)}, R_{i-1})$,
the backpropagation graph contains only the unrolled state updates from
segments $b_i+1$ to $i-1$.
TBPTT with truncation depth $K$ computes gradients on exactly this truncated graph, hence equals
$\nabla_\theta L_K(\theta)$.

\subsection{Chain-Rule Derivation via Jacobian Products and Adjoint Recursion}
\label{app:chain-rule}

We provide an explicit chain-rule expansion for the gradient computed by TBPTT and show that it
equals $\nabla_\theta L_K(\theta)$.

\paragraph{Notation.}
Let $C_i\in\mathbb{R}^{d_C}$ be the vectorized interface state.
Define the Jacobian of the state transition:
\begin{equation}
J_i := \frac{\partial C_i}{\partial C_{i-1}} \in \mathbb{R}^{d_C\times d_C},
\end{equation}
and the sensitivity of the transition to parameters:
\begin{equation}
U_i := \frac{\partial C_i}{\partial \theta}.
\end{equation}
For a fixed loss term $\ell_i$, define its gradient w.r.t.\ the pre-segment state:
\begin{equation}
g_{i-1}^{(i)} := \frac{\partial \ell_i}{\partial C_{i-1}}.
\end{equation}

\paragraph{Truncated dependency induced by $\sg(\cdot)$.}
Under $L_K(\theta)$, the computational graph for $\ell_i$ includes only the
unrolled state transitions from $b_i+1$ to $i-1$, because
$\tilde C_{b_i}^{(K)}=\sg(C_{b_i})$ blocks gradients into earlier segments.
Therefore, for $\ell_i$ we only need to account for parameter effects on
$C_j$ for $j \in \{b_i+1,\dots,i-1\}$.

\paragraph{Adjoint recursion (backward pass).}
Define adjoints for $\ell_i$:
\begin{align}
a_{i-1}^{(i)} &:= \frac{\partial \ell_i}{\partial C_{i-1}} = g_{i-1}^{(i)}, \\
a_{j-1}^{(i)} &:= a_{j}^{(i)} J_j, \quad \text{for } j=i-1,i-2,\dots,b_i+1.
\label{eq:adjoint}
\end{align}
This recursion is precisely the backward propagation across the truncated state chain, stopping at the boundary state $C_{b_i}$.

\paragraph{Gradient expansion for one loss term.}
By the chain rule, the gradient of $\ell_i$ w.r.t.\ $\theta$ under the truncated graph is
\begin{equation}
\frac{\partial \ell_i}{\partial \theta}
=
\left.\frac{\partial \ell_i}{\partial \theta}\right|_{\mathrm{direct}}
+
\sum_{j=b_i+1}^{i-1} a_j^{(i)} U_j
\label{eq:one-term-grad}
\end{equation}
where the ``direct'' term captures explicit dependence of $\ell_i$ on $\theta$ not mediated by $C$,
and the summation captures all contributions flowing through the unrolled (and truncated) state updates.

\paragraph{From one term to the full objective.}
Summing Eq.~\eqref{eq:one-term-grad} over $i$ yields
\begin{equation}
\nabla_\theta L_K(\theta)
=
\sum_{i=1}^{N}
\left(
\left.\frac{\partial \ell_i}{\partial \theta}\right|_{\mathrm{direct}}
+
\sum_{j=b_i+1}^{i-1} a_{j}^{(i)} U_j
\right),
\label{eq:LK-grad}
\end{equation}
which exactly matches the computation performed by TBPTT with truncation depth $K$:
backpropagate each loss through at most $K$ transitions (Eq.~\eqref{eq:adjoint}) and accumulate contributions.

This completes a constructive proof that TBPTT computes the exact gradient $\nabla_\theta L_K(\theta)$.

\section{Forward-Only Long-Range Retrieval Does Not Add Gradient Paths}
\label{app:nograd}

\begin{lemma}[No-gradient retrieval prefix]
\label{lem:nograd-retrieval}
Suppose the long-range prefix is constructed as
\begin{equation}
R_{i-1} = \sg\big(\rho(M_{i-1}, q_{i-1})\big),
\end{equation}
where $\rho$ is an arbitrary retrieval operator (e.g., top-$k$ search), $M_{i-1}$ is an external memory,
and $q_{i-1}$ is a query.
Then $R_{i-1}$ introduces no gradient path w.r.t.\ model parameters:
\begin{equation}
\frac{\partial R_{i-1}}{\partial \theta} = 0.
\end{equation}
\end{lemma}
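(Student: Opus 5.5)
The plan is to argue directly from the definition of the stop-gradient operator given in Definition~\ref{def:LK}: $\sg$ is the identity in the forward pass, and its Jacobian is defined to be zero in the backward pass. The idea is to treat $R_{i-1}$ as a composite function of $\theta$ and apply the chain rule, with the outermost factor contributed by $\sg$.

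Concretely, I would write $u := \rho(M_{i-1}, q_{i-1})$. Here $u$ may depend on $\theta$ in several ways. The query $q_{i-1}$ is built from long-range-head states of segment $i-1$, so it can depend on $\theta$. The memory $M_{i-1}$ holds KV vectors produced by $F_\theta$, so it can too. The operator $\rho$ may also be non-differentiable, for example because of top-$k$ selection.

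By the chain rule,
\begin{equation*}
\frac{\partial R_{i-1}}{\partial \theta} = \frac{\partial\, \sg(u)}{\partial u}\cdot \frac{\partial u}{\partial \theta},
\end{equation*}
and $\partial\,\sg(u)/\partial u = 0$ by definition, so the product vanishes. The one subtlety to handle, and the main obstacle, is that $\partial u/\partial\theta$ may be undefined when $\rho$ involves discrete top-$k$ selection. I would resolve this at the level of the backward computational graph rather than classical calculus. Reverse-mode differentiation propagates the adjoint $\bar R_{i-1}$ through $\sg$ as $\bar u = 0$, so no adjoint ever reaches $\rho$, $M_{i-1}$, or $q_{i-1}$. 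Hence the derivative of $\rho$ is never evaluated, and the zero contribution is well defined regardless of the regularity of $\rho$. This matches the implementation described in Section~\ref{subsec:arch}, where pool entries are detached on insertion and retrieval is forward-only.

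Finally, I would connect this to the gradient expansion in Appendix~\ref{app:chain-rule}. Since $R_{j-1}$ enters $\ell_i$ and the truncated updates in Eq.~\eqref{eq:truncated-update} only as a constant with zero parameter-sensitivity, the term-wise gradient in Eq.~\eqref{eq:one-term-grad} gains no extra summands through $R$. Its ``direct'' part treats $R_{i-1}$ as a fixed input. This confirms that retrieval adds no cross-segment credit-assignment path and leaves the exactness of Theorem~\ref{thm:tbptt-exact} intact.
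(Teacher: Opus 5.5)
Your proposal is correct and takes essentially the same approach as the paper, whose proof is the one-line observation that $\mathrm{sg}(\cdot)$ makes $R_{i-1}$ a constant in backward propagation, so its derivative with respect to $\theta$ is zero. Your reverse-mode argument — no adjoint reaches $\rho$, so the possibly undefined derivative of top-$k$ is never evaluated — and your link to the gradient expansion are sound elaborations of that argument rather than a different route.
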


\begin{proof}
The stop-gradient operator $\sg(\cdot)$ makes $R_{i-1}$ constant in backward propagation,
hence its derivative w.r.t.\ $\theta$ is zero.
\end{proof}

\paragraph{Implication.}
Lemma~\ref{lem:nograd-retrieval} implies the optional long-range module only changes forward conditioning
but does not create additional cross-segment credit assignment paths. Therefore it does not affect
Theorem~\ref{thm:tbptt-exact}.

\section{Attention Contexts for Local and Long-Range Heads}
\label{app:attention}

We formalize the per-head attention context used by the architecture in Section~\ref{subsec:arch}.
Let segment $x^{(i)}$ have length $m_i$.
For layer $\ell$ and head $h$, denote within-segment query/key/value matrices as
$Q_{i}^{(\ell,h)}\in\mathbb{R}^{m_i\times d}$, $K_{i}^{(\ell,h)}\in\mathbb{R}^{m_i\times d}$, and
$V_{i}^{(\ell,h)}\in\mathbb{R}^{m_i\times d}$.

We use standard scaled dot-product attention:
\begin{equation}
\mathrm{Attn}(Q,K,V) = \mathrm{softmax}\!\left(\frac{QK^\top}{\sqrt{d}} + \mathcal{M}\right)V,
\label{eq:attn}
\end{equation}
where $\mathcal{M}$ is a causal mask over the concatenated prefix+segment sequence:
each segment query position may attend to all prefix positions and to within-segment positions up to itself.

\paragraph{Carried KV tail and retrieval prefix.}
For local heads $h\in\mathcal{H}_{\text{local}}$, let the carried KV tail from the previous segment be
$C_{i-1}^{(\ell,h)}=(K_{\text{carry}}^{(\ell,h)}\in\mathbb{R}^{M\times d}, V_{\text{carry}}^{(\ell,h)}\in\mathbb{R}^{M\times d})$.
For long-range heads $h\in\mathcal{H}_{\text{long}}$ and $\ell\in\mathcal{L}_{\text{long}}$, let the retrieval prefix be
$R_{i-1}^{(\ell,h)}=(K_{\text{ret}}^{(\ell,h)}\in\mathbb{R}^{R\times d}, V_{\text{ret}}^{(\ell,h)}\in\mathbb{R}^{R\times d})$.

\paragraph{Context construction.}
Define context keys/values $(K_{\text{ctx}},V_{\text{ctx}})$ as:
\begin{equation}
(K_{\text{ctx}},V_{\text{ctx}})=
\begin{cases}
([K_{\text{carry}}^{(\ell,h)}; K_i^{(\ell,h)}], [V_{\text{carry}}^{(\ell,h)}; V_i^{(\ell,h)}])
& \text{if } h\in\mathcal{H}_{\text{local}},\\
([K_{\text{ret}}^{(\ell,h)}; K_i^{(\ell,h)}], [V_{\text{ret}}^{(\ell,h)}; V_i^{(\ell,h)}])
& \text{if } h\in\mathcal{H}_{\text{long}} \text{ and } \ell\in\mathcal{L}_{\text{long}},\\
(K_i^{(\ell,h)}, V_i^{(\ell,h)})
& \text{if } h\in\mathcal{H}_{\text{long}} \text{ and } \ell\notin\mathcal{L}_{\text{long}}.
\end{cases}
\label{eq:ctx}
\end{equation}
The head output is then $O_i^{(\ell,h)}=\mathrm{Attn}(Q_i^{(\ell,h)},K_{\text{ctx}},V_{\text{ctx}})$.

\section{Top-$k$ Retrieval Operator}
\label{app:retrieval}

We detail how the past-only KV pool produces a fixed-size retrieval prefix $R_{i-1}$ of length $R$.

\paragraph{Memory layout.}
For each $\ell\in\mathcal{L}_{\text{long}}$, we maintain a past-only KV pool storing key/value vectors for heads
$h\in\mathcal{H}_{\text{long}}$ accumulated from earlier segments. Pool updates are append-only and performed without gradient
flow (Lemma~\ref{lem:nograd-retrieval}).

\paragraph{Query construction.}
From segment $i{-}1$, we take the last $L_q$ query vectors from long-range heads and form a compact set of query summaries
using (i) sliding-window mean pooling and (ii) a short tail average to emphasize recent tokens.

\paragraph{Scoring and top-$k$.}
For each head independently, we compute dot-product scores between query summaries and all keys in the pool, select top-$k$
indices, and deduplicate them. We then choose a small set of anchor indices with the highest scores.

\paragraph{Window expansion and padding to length $R$.}
We expand anchors with a small fixed offset window to obtain a contiguous set of indices, deduplicate/sort, and if fewer than
$R$ indices remain we pad with a deterministic fallback (e.g., earliest positions) to reach exactly $R$ keys/values.

\paragraph{Output.}
We gather the selected keys/values as $(K_{\text{ret}}^{(\ell,h)},V_{\text{ret}}^{(\ell,h)})\in\mathbb{R}^{R\times d}$ and return them
as the retrieval prefix $R_{i-1}^{(\ell,h)}$ used in Eq.~\eqref{eq:ctx}.

\section{RoPE Re-indexing for Prefix Concatenation}
\label{app:rope}

We specify the RoPE positions used when concatenating a prefix of length $P$
before the current segment, where $P=M$ for the carried KV tail and $P=R$
for the retrieved prefix.
For segment $i$ of length $m_i$, we assign prefix positions
$p_{\text{pref}}=\{0,\dots,P{-}1\}$ and segment positions
$p_{\text{seg}}=\{P,\dots,P{+}m_i{-}1\}$.
\paragraph{RoPE application.}
We apply RoPE to segment queries/keys using positions $p_{\text{seg}}$.
For prefix KV (carried or retrieved), only keys require RoPE rotation; we apply RoPE to prefix keys using positions
$p_{\text{pref}}$. Values are unchanged.
This is equivalent to treating the prefix as preceding context under standard RoPE while leaving the segment attention
structure unchanged.
\section{Additional Implementation Details}
\label{app:add_imp}
\paragraph{Implementation details of our method.}
Unless otherwise specified, our method is trained and evaluated under identical
segment-level execution settings.
Both training and inference use a fixed segment length of $S{=}4096$, and training
samples are partitioned into non-overlapping segments at the individual sample level.
The differentiable carried interface is instantiated as a KV tail of fixed length
$M{=}512$ passed from segment $i{-}1$ to segment $i$.
The long-range module is forward-only and uses the last $L_q \in [24,64]$ query vectors from the previous segment to form retrieval queries, returning a fixed-size retrieved prefix of length
$R{=}512$ when enabled.

For the main experiments, we use truncated backpropagation through time with
truncation depth $K{=}1$, which corresponds to a maximum training context length of
8K tokens. Experiments with $K{=}2$ (maximum training context length 12K) are only
used for ablation studies.
For alignment fine-tuning, we optimize a standard language modeling objective on
the SlimPajama dataset for 1{,}000 training steps, using a micro-batch size of 1
with gradient accumulation of 8.
All alignment-based models are trained under identical optimization settings.
Our primary experiments are conducted on LLaMA2-7B models with extended context
lengths of 32K and 80K. Additional experiments on LLaMA-3.1 are reported only for
LongBench-v2 evaluation.
\paragraph{Implementation details of baseline methods.}
We compare our approach against representative long-context baselines, including
CCA-attention, MInference, NSA, StreamingLLM, and DuoAttention.

For CCA-attention, we adopt the official implementation and follow the default
configuration, using a window size of 1024 and a group size of 16.

For MInference, we use the official implementation with the recommended sparse
attention configuration provided by the authors.

For NSA~\cite{DBLP:conf/acl/YuanGD0ZZXWW0WR25}, since the official implementation was not publicly available at the time of our experiments,
we use the reproduced implementation from~\citet{DBLP:journals/corr/abs-2509-24663}.
NSA is evaluated on the same LLaMA2-7B-80K backbone and under the same standard LongBench protocol
used in Appendix~\ref{app:longbench-nsa}.

For StreamingLLM, the official implementation does not employ FlashAttention,
which would lead to unfair efficiency comparisons in long-context settings.
Therefore, we evaluate StreamingLLM through the implementation provided in the
MInference framework, which supports FlashAttention-based execution.
Specifically, we enable the StreamingLLM execution path by setting
\texttt{attn\_type=streaming} with \texttt{kv\_type=streamingllm}, corresponding
to a sliding-window attention mechanism with attention sinks.
We use a local window size of 2044 tokens and retain 4 initial sink tokens
(i.e., \texttt{n\_local=2044} and \texttt{n\_init=4}), following the standard
configuration used in prior work.

For DuoAttention, we use the official implementation and the attention patterns
released by the authors.
For LLaMA2-7B-32K and LLaMA2-7B-80K, we apply the provided
\texttt{Llama-2-7B-32K-Instruct} patterns.
For LLaMA-3.1-8B-Instruct, we use the official
\texttt{Meta-Llama-3.1-8B-Instruct} patterns.
Unless otherwise specified, all baseline methods are evaluated under their
standard pretrained configurations without additional fine-tuning.

\section{Additional Experiment Results}
\label{app:add_exp}

\subsection{Standard LongBench Evaluation with NSA Baseline}
\label{app:longbench-nsa}

\begin{table*}[t]
\centering
\caption{
Standard LongBench results on LLaMA2-7B-80K. LongBench-E is a length-stratified subset of LongBench;
we additionally report results on the standard LongBench benchmark and include NSA as an additional sparse-attention baseline.
}
\label{tab:standard-longbench-nsa}
\vspace{2mm}
\resizebox{\textwidth}{!}{
\begin{tabular}{lccccccccccccccccc}
\toprule
Method
& NQA & Qasper & MF-en & MF-zh & HQA & 2Wiki & MuSiQue & Dureader
& GovReport & QMSum & MultiNews & TREC & TriviaQA & SAMSum & LCC & RepoBench-P & Avg. \\
\midrule
LLaMA2-7B-80K
& 2.07 & 4.99 & 6.64 & 3.59 & 6.00 & 4.15 & 2.32 & 17.97
& 10.55 & 20.34 & 8.56 & 69.00 & 86.75 & 43.29 & 55.66 & 45.33 & 24.20 \\
StreamingLLM
& 1.03 & 3.06 & 4.93 & 1.88 & 6.83 & 2.31 & 2.81 & 8.10
& 5.03 & 15.30 & \textbf{12.04} & 60.50 & 87.07 & 41.36 & 57.32 & 45.86 & 22.21 \\
DuoAttention
& 1.56 & 5.28 & 5.81 & 4.61 & 5.79 & 4.25 & 2.74 & 16.95
& \textbf{11.26} & 19.37 & 9.73 & 69.00 & 87.12 & 31.36 & 56.07 & 45.58 & 23.53 \\
MInference
& 1.92 & 5.04 & 6.44 & 3.58 & 5.60 & 3.87 & 2.66 & \textbf{18.38}
& 10.32 & \textbf{20.85} & 8.81 & \textbf{69.50} & 86.67 & \textbf{43.74} & 55.82 & 45.30 & 24.28 \\
NSA
& 2.24 & 5.23 & 7.10 & 8.34 & 5.44 & 6.99 & 3.67 & 11.68
& 10.10 & 15.14 & 4.52 & 60.00 & 73.27 & 26.84 & 53.94 & 44.11 & 21.16 \\
\midrule
Ours
& \textbf{5.89} & \textbf{8.22} & \textbf{8.11} & \textbf{9.68}
& \textbf{6.85} & \textbf{9.87} & \textbf{3.81} & 12.77
& 9.38 & 16.81 & 10.30 & 63.50 & \textbf{88.22} & 41.99
& \textbf{58.47} & \textbf{51.93} & \textbf{25.36} \\
\bottomrule
\end{tabular}
}
\vspace{-2mm}
\end{table*}

As shown in Table~\ref{tab:standard-longbench-nsa}, our method achieves the best overall average on the standard LongBench benchmark.
Compared with NSA, our method obtains stronger average performance and improves substantially on QA-oriented tasks such as
NarrativeQA, Qasper, HotpotQA, 2WikiMQA, and MuSiQue, suggesting that the gains observed on LongBench-E are not specific to the length-stratified subset.

\subsection{LLaMA-3.1 Results on LongBench-v2}
Since the main experiments are conducted on LLaMA2-based backbones, we further
examine whether the proposed method exhibits similar trends on a more recent
architecture.
We apply our approach to LLaMA-3.1-8B-Instruct and report additional results on
LongBench-v2, without modifying the method or tuning hyperparameters for this
backbone.

As shown in Table~\ref{tab:longbenchv2}, our method shows performance trends
consistent with those observed on LLaMA2, achieving competitive accuracy across
difficulty levels and context length categories.
While DuoAttention attains the highest overall score in this setting, our approach
remains comparable without relying on backbone-specific attention patterns or
additional fine-tuning, suggesting that the proposed training--inference consistent segmented execution
generalizes beyond a specific backbone architecture.
\begin{table}[h]
\centering
\caption{Comparisons on LongBench-V2.}\label{tab:longbenchv2}
\begin{tabular}{l|ccccc|c} 
\hline
Method & Easy & Hard & Short & Medium & Long & Overall \\
\hline
Llama-3.1-8B-Instruct & 30.7 & 29.6 & 35.0 & 26.5 & 28.7 & 30.0\\
StreamingLLM & 22.9 & 23.8 & 30.0 & 18.6 & 22.2 & 23.5\\
MInference & 27.1 & 28.0 & 33.3 & 24.2 & 25.0 & 27.6\\
DuoAttention & 31.2 & 31.2 & 36.1 & 27.0 & 31.5 & \textbf{31.2}\\
(Ours) & 27.6 & 31.2 & 33.3 & 28.4 & 26.9 & 29.8\\
\hline
\end{tabular}
\end{table}
\subsection{Effect of Training--Inference Alignment on Language Modeling Perplexity}
To clarify why training--inference misalignment leads to the downstream performance
degradation observed in the main experiments, we report language modeling perplexity
under different TBPTT settings.
Under aligned training--inference execution (TBPTT = 1 or 2), perplexity remains
stable across evaluation context lengths, indicating that truncated backpropagation
preserves language modeling stability in the proposed framework.
In contrast, misaligned training results in rapidly increasing and unstable
perplexity as context length grows, particularly beyond the training range.
These results indicate that the downstream performance collapse reported in the main
text is preceded by severe instability at the language modeling level, highlighting
the necessity of training--inference consistency under constrained execution.
This observation empirically corroborates the theoretical result that, under the
proposed constrained cross-segment recursion, TBPTT with $K{=}1$ yields an exact
gradient for the inference-consistent objective.

\begin{table}[h]
\centering
\caption{Perplexity under increasing context lengths for different training--inference
alignment and TBPTT settings.}
\label{tab:tbptt_ppl}
\begin{tabular}{l|ccccc|c}
\toprule
Method & 4K & 8K & 16K & 32K & 64K & Avg \\
\midrule
Aligned (TBPTT=1) & 7.18 & 7.10 & 7.07 & 7.06 & 7.07 & 7.10 \\
Misaligned        & 7.16 & 33.48 & 68.35 & 94.60 & 111.38 & 62.99 \\
Aligned (TBPTT=2) & 7.18 & 7.09 & 7.05 & 7.04 & 7.04 & \textbf{7.08} \\
\bottomrule
\end{tabular}
\end{table}
\subsection{Effect of Local State Capacity}
To study the effect of local cross-segment state capacity, we evaluate language
modeling perplexity and downstream long-context performance under different local
KV state sizes, while keeping all other settings fixed.
As shown in Table~\ref{tab:capacityppl}, increasing the local KV state size
consistently reduces perplexity across context lengths, indicating improved
stability of cross-segment modeling, with diminishing gains at larger capacities.

This trend is mirrored in downstream results on LongBench-E
(Table~\ref{tab:capacity_longbench}).
Compared to removing the local state, introducing a finite-capacity local KV state
improves overall performance, while further increasing the state size yields only
marginal and non-uniform gains across task categories.
Overall, these results suggest that a moderate local state capacity is sufficient
to balance language modeling stability and downstream performance under constrained
execution. We hypothesize that overly large local state capacity may encourage over-reliance
on short-range carried states, which does not uniformly benefit tasks requiring
global abstraction, leading to non-monotonic gains across categories.

\begin{table}[h]
\centering
\caption{Perplexity under increasing context lengths with different local state capacities.}
\label{tab:capacityppl}
\begin{tabular}{l|ccccc|c}
\toprule
Local KV state size & 4K & 8K & 16K & 32K & 64K & Avg \\
\midrule
0     & 7.18 & 7.15 & 7.15 & 7.15 & 7.17 & 7.16 \\
512   & 7.18 & 7.10 & 7.07 & 7.06 & 7.07 & 7.10 \\
1024  & 7.18 & 7.07 & 7.03 & 7.03 & 7.03 & \textbf{7.07} \\
\bottomrule
\end{tabular}
\end{table}

\begin{table}[h]
\centering
\caption{Effect of local state capacity on LongBench-E category performance.}
\label{tab:capacity_longbench}
\begin{tabular}{l|cccccc|c}
\toprule
Local KV state size  & S. QA & M. QA & Sum. & FS. Learning & Synthetic & Code & Avg \\
\midrule
0     & 7.10 & 7.35 & 7.58 & 62.37 & 0.35 & 54.88 & 23.27 \\
512   & 7.58 & 8.56 & 8.70 & 63.76 & 0.42 & 56.02 & 24.17 \\
1024  & 8.08 & 7.67 & 8.57 & 63.80 & 0.21 & 56.82 & \textbf{24.19} \\
\bottomrule
\end{tabular}
\end{table}
\subsection{Effect of Long-Range Module Placement}
To examine the effect of long-range module placement, we compare language modeling
perplexity and downstream long-context performance under different numbers of
long-range layers.
This ablation isolates the impact of long-range module depth while keeping all
other settings unchanged.

From a language modeling perspective, perplexity remains highly similar across
all configurations and evaluation context lengths (Table~\ref{tab:longrange_ppl}),
indicating that inserting long-range modules does not materially affect basic
language modeling stability.
In contrast, downstream results on LongBench-E exhibit clearer differences
(Table~\ref{tab:longrange_longbench}).
Configurations without long-range modules or with only a small number of such
layers achieve lower overall performance, whereas introducing long-range modules
at more layers leads to consistent improvements, particularly on tasks requiring
cross-segment information integration.

Taken together, these results indicate that long-range modules primarily improve
downstream cross-segment reasoning without materially affecting language modeling
perplexity, consistent with their architectural role as forward-only evidence
access mechanisms that do not participate in state recursion.

\begin{table}[h]
\centering
\caption{Perplexity under increasing context lengths with different numbers of
long-range layers.}
\label{tab:longrange_ppl}
\begin{tabular}{l|ccccc|c}
\toprule
Long-range layers & 4K & 8K & 16K & 32K & 64K & Avg \\
\midrule
0     & 7.17 & 7.10 & 7.06 & 7.04 & 7.04 & \textbf{7.08} \\
2   & 7.18 & 7.10 & 7.06 & 7.05 & 7.05 & 7.09 \\
4  & 7.18 & 7.10 & 7.07 & 7.06 & 7.07 & 7.10 \\
\bottomrule
\end{tabular}
\end{table}

\begin{table}[h]
\centering
\caption{LongBench-E category performance with different numbers of long-range layers.}
\label{tab:longrange_longbench}
\begin{tabular}{l|cccccc|c}
\toprule
Long-range layers  & S. QA & M. QA & Sum. & FS. Learning & Synthetic & Code & Avg \\
\midrule
0     & 6.10 & 5.52 & 8.77 & 62.10 & 0.00 & 53.29 & 22.63 \\
2   & 5.47 & 5.61 & 7.51 & 62.51 & 0.08 & 53.47 & 22.44 \\
4  & 7.58 & 8.56 & 8.70 & 63.76 & 0.42 & 56.02 & \textbf{24.17} \\
\bottomrule
\end{tabular}
\end{table}

\subsection{Effect of Long-Range Head Grouping}
\label{app:head-grouping}

We further ablate the choice of long-range heads while keeping the number of long-range heads fixed.
All variants use the same long-range-enabled layers
$\mathcal{L}_{\mathrm{long}}=\{6,8,11,18\}$ and differ only in how attention heads are assigned to
$\mathcal{H}_{\mathrm{long}}$ and $\mathcal{H}_{\mathrm{local}}$.
We compare three head grouping strategies:
(i) contiguous grouping, where
$\mathcal{H}_{\mathrm{long}}=\{0,\dots,15\}$ and
$\mathcal{H}_{\mathrm{local}}=\{16,\dots,31\}$;
(ii) interleaved grouping, where
$\mathcal{H}_{\mathrm{long}}=\{0,2,4,\dots,30\}$ and
$\mathcal{H}_{\mathrm{local}}=\{1,3,5,\dots,31\}$; and
(iii) prior-based grouping, which uses
$\mathcal{H}_{\mathrm{long}}=\{0,1,2,4,9,12,14,15,16,18,19,22,23,26,29,30\}$.
The remaining heads are assigned to $\mathcal{H}_{\mathrm{local}}$.
All layer and head indices follow the zero-indexed implementation convention.

\begin{table*}[t]
\centering
\caption{
Effect of long-range head grouping on the standard LongBench benchmark using LLaMA2-7B-80K.
All variants use the same long-range-enabled layers
$\mathcal{L}_{\mathrm{long}}=\{6,8,11,18\}$ and the same number of long-range heads;
only the head grouping strategy is changed.
}
\label{tab:head-grouping}
\vspace{2mm}
\resizebox{\textwidth}{!}{
\begin{tabular}{lccccccccccccccccc}
\toprule
Head Grouping
& NQA & Qasper & MF-en & MF-zh & HQA & 2Wiki & MuSiQue & Dureader
& GovReport & QMSum & MultiNews & TREC & TriviaQA & SAMSum & LCC & RepoBench-P & Avg. \\
\midrule
Contiguous
& 4.18 & 4.75 & 6.88 & 5.99 & 6.27 & 5.07 & 3.11 & 9.62
& \textbf{11.51} & 16.18 & \textbf{10.92} & 62.00 & 86.32 & \textbf{42.18}
& 56.27 & 51.89 & 23.95 \\
Interleaved
& 5.32 & 5.04 & 6.74 & 6.09 & 5.61 & 7.15 & 2.77 & 10.31
& 6.59 & 14.89 & 10.37 & 62.50 & 87.68 & 41.67
& 58.14 & \textbf{52.17} & 23.94 \\
Prior-based
& \textbf{5.89} & \textbf{8.22} & \textbf{8.11} & \textbf{9.68}
& \textbf{6.85} & \textbf{9.87} & \textbf{3.81} & \textbf{12.77}
& 9.38 & \textbf{16.81} & 10.30 & \textbf{63.50} & \textbf{88.22} & 41.99
& \textbf{58.47} & 51.93 & \textbf{25.36} \\
\bottomrule
\end{tabular}
}
\vspace{-2mm}
\end{table*}

As shown in Table~\ref{tab:head-grouping}, contiguous and interleaved groupings obtain comparable average performance,
while the prior-based grouping achieves the best overall average.
This suggests that the benefit of the long-range channel depends not only on the number of long-range heads,
but also on assigning long-range capacity to heads with stronger retrieval-oriented behavior.




\end{document}